\newcommand{\jac}[2]{\frac{\partial {#1}}{\partial{#2}}}
\newtheorem{theorem}{Theorem}
\newtheorem{claim}{Claim}
\newtheorem{defn}{Definition}
\begin{document}

% paper title
\title{Learning the Problem-Optimum Map: Analysis and Application to Global Optimization in Robotics}

%\author{Kris~Hauser,~\IEEEmembership{Member,~IEEE}% <-this % stops a space
\author{Kris~Hauser
\thanks{K. Hauser is with the Departments
of Electrical and Computer Engineering and of Mechanical Engineering and Materials Science, Duke University, Durham,
NC, 27708 USA e-mail: kris.hauser@duke.edu.}% <-this % stops a space
%\thanks{Manuscript received April 19, 2005; revised September 17, 2014.}
}

\maketitle

\begin{abstract}
This paper describes a data-driven framework for approximate global optimization in which precomputed solutions to a sample of problems are retrieved and adapted during online use to solve novel problems.  This approach has promise for real-time applications in robotics, since it can produce near-globally optimal solutions orders of magnitude faster than standard methods.  This paper establishes theoretical conditions on how many and where samples are needed over the space of problems to achieve a given approximation quality.  The framework is applied to solve globally optimal collision-free inverse kinematics (IK) problems, wherein large solution databases are used to produce near-optimal solutions in sub-millisecond time on a standard PC. 
\end{abstract}

\section{Introduction}
Real-time optimization has been a longstanding challenge for robotics research due to the need for robots to react to changing environments, to respond naturally to humans, and to interact fluidly with other agents.  Applications in robotics are pervasive, including autonomous vehicle trajectory optimization, grasp optimization for mobile manipulators, footstep planning for legged robots, and generating safe motions in proximity to humans.  Global optimality has long been sought, but is generally computationally complex in the high-dimensional, nonconvex problems typical in robotics.  As a result, most researchers resort to local optimization or heuristic approaches, which have no performance guarantees. 

A promising approach is to integrate precomputed data (i.e., experience) into optimization to reduce online computation times~\cite{berenson2012robot,6672028,Goldfeder2011,jetchev2013fast,lien2009planning,PCA2014,PCCL2012}.  This idea is attractive because humans spend little time deliberating when they have seen a similar problem as one solved before, and hence robots may also benefit by learning from experience.  But because experience is time consuming to generate, it is important to pose the question, {\em how much data is needed to learn robot optimal control tasks}?

This paper formalizes this question in a fairly general context of global nonlinear optimization with nonlinear constraints. Consider a structure where problems are drawn from some family of related problems, such that problems can be parameterized by a set of continuous {\em P-parameters} that modify a problem's constraints and objective function.  These parameters
are not themselves decision variables, but rather they alter the constraints of the problem and hence affect the value of the optimal solution (Fig.~\ref{fig:framework}).  For example, in inverse kinematics, the P-parameter space is identical to the notion of task space.  For a robot picking and placing objects on a table, the P-parameters may be the source and target pose of the object.  In grasp optimization, P-parameters specify some parametric representation of object geometry.  In other words, they give a deterministic ``feature vector'' of the type of optimization problems we would like to learn.  We can then speak of learning a {\em problem-optimum map} (note that some problems may have multiple optima, so a map may be nonunique).

We present a quite general Learning Global Optima (LGO) framework in which a learner is given a database of optimal solutions, called {\em examples}, that are generated for problems sampled from a given problem distribution, assumed to be representative of problems faced in practice.  Since this step is offline, brute force methods can be used to obtain global optima or near-optima with high probability.  In the query phase, a solution to a previous problem is adapted to satisfy the constraints of a novel problem, typically via local optimization.

\begin{figure}[t]
\centering
\includegraphics[width=0.4\textwidth]{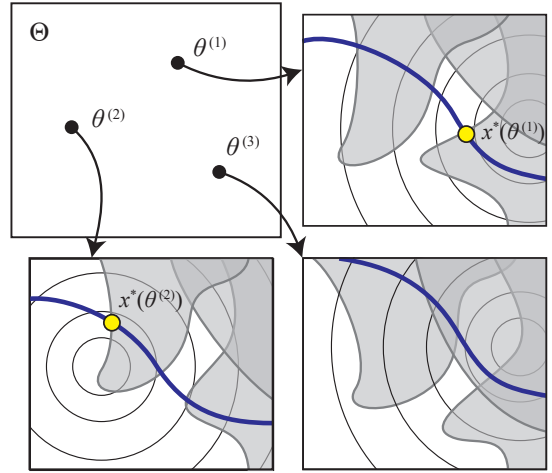}
\caption{Illustrating the problem space concept.  Instances $\theta^{(1)}$, $\theta^{(2)}$, and $\theta^{(3)}$ drawn from problem space $\Theta$ each correspond to different optimization problems.  Each problem has an objective function (level sets drawn as circles), inequality constraints (shaded region), and equality constraints (thick curve) dependent on the choice of $\theta$.  Each problem $\theta \in \Theta$ has a set of optima $x^\star(\theta)$, which may be empty if the problem has no feasible solution (e.g., $\theta^{(3)}$). }
\label{fig:framework}
\end{figure}

The learner's performance relies on whether a {\em solution similarity principle} holds across problem space: the optimal solution to a nearby example problem is likely to be close to the global optimum of the query problem, and hence using it to seed local optimization is likely to yield global solution. 
We derive formal results guaranteeing that the solution similarity principle partially holds, under quite general assumptions regarding the structure of problem space.  Given certain smoothness constraints, the optimal cost function has bounded derivatives almost everywhere across problem space, except certain discontinuous boundaries.  As a result, it is possible to generate a finite database of examples to cover problem space ``sufficiently well'' to achieve bounded suboptimality in the query phase.

To some extent these are positive results, but they can also be interpreted negatively.  First, analysis indicates that the number of examples needed scales exponentially in problem space dimension.  This indicates that learning requires vast experience in high-dimensional problem spaces, unless further strong assumptions are made.  Second, the problem-optimum map is in fact only piecewise-continuous, which is challenging to learn for parametric function approximators like neural networks or Gaussian process models~\cite{PCA2014}.  This justifies the choice in our implementation of a $k$-nearest-neighbor approach that is better suited to capturing these discontinuities.

An implementation of the LGO framework is demonstrated on the decades-old --- but still surprisingly challenging --- problem of inverse kinematics (IK).  An general solution would be able to calculate optimal solutions, handle redundant robots, gracefully handle infeasible problems, and incorporate collision avoidance, all done in real-time.  Yet current approaches fall short of addressing all of these challenges. %In particular, redundant robots usually face either an infinite number of IK solutions or no solution.  In the infinite solution case, redundancy can be exploited to maintain other hard constraints like avoiding collisions, or soft optimality criteria, such as avoiding joint limits, singularities, or to reduce actuator effort.  In the no-solution case, such as when a target is out of reach, the IK solver should respond confidently that no solution exists, but global optimization is typically unable to do so in a computationally efficient manner.  
Our proposed implementation addresses optimality, redundancy, infeasibility, and collision avoidance in a unified manner.  It works with general articulated robots and produces near-optimal, collision-free IK solutions typically in less than one millisecond on a standard PC.  It can also predict infeasible queries with high confidence (over 98\%), and is amenable to a ``lifelong learning'' concept that, given a handful of seed queries, automatically populates the IK database in the background.  As a result, the solver progressively solves IK queries faster and more reliably as more time is spent using it.  This IKDB package is made publicly available at \url{http://motion.pratt.duke.edu/ikdb/}.

%Finally, we present a convenient ``automated learning'' implementation of this technique.  The IK solver works with arbitrary robots specified by Universal Robot Description Format (URDF) files, static environment geometries given as CAD models, IK constraints, and additional user-defined feasibility constraints and objective functions.  It automatically learns problem features from unstructured queries, and automatically runs in the background to self-populate the IK database.  As a result, after seeding it with a few dozen queries, the solver progressively solves IK queries faster and more reliably as more time is spent using it.  

\section{Related work}
The results of this work are relevant to the significant body of related literature in robot control learning.  Reinforcement learning proposes that robots record their experience interacting with the physical world and then progressively optimize their behavior to improve performance.  Other researchers study the approach of learning from demonstration, which asks human teachers to provide robots with instances of ``good'' motions.  In either case, it is quite time consuming to provide physical robots with experience or demonstrations from human teachers, making the question of ``how much experience is needed?'' of the utmost importance.

Despite decades of research, it is still challenging to quickly compute high quality solutions for high dimensional nonlinear optimization problems.  %Branch-and-bound approaches yield global solutions with quality guarantees, but are difficult to apply to arbitrary problems due to the need to bound the objective value over sets of parameter space, and are exceptionally slow in more than tens of dimensions.  Metaheuristic approaches like random-restart local optimization, simulated annealing, or evolutionary optimization are widely used, but are slow to converge and have no performance guarantees. 
A promising approach is to ``reuse'' previous solutions to solve similar problems, in which reuse is applied in a variety of forms.  Machine learning approaches are popular, and techniques have been used to predict the success rate of initial solutions or trajectories for optimization~\cite{Cassioli2010,PCA2014}.  
Other authors have used databases of grasps to generate grasps for novel objects~\cite{6672028,Goldfeder2011}. 
Another form of reuse is compositional, in which small solutions are composed to solve larger problems.  For example, footstep-based legged locomotion planners often store a small set of optimized motion primitives and replay them to execute footsteps~\cite{chestnutt2003planning,kuffner2001footstep}.  Rather than directly replaying motions, some other methods apply more sophistication adaptation methods to increase the range of problems to which a given example can be adapted~\cite{hauser06}.  Experience has also been studied for avoiding recomputation of roadmap structures when obstacles change~\cite{lien2009planning,PCCL2012}. 

The instance-based framework used in this paper is arguably most closely related to the work of Jetchev and Toussaint~\cite{jetchev2013fast}, which select past plans to initialize trajectory optimization.  Similarly, this paper uses experience to initialize optimization, but considers the more general context of nonlinear optimization problems, and focuses on the conditions under which such databases provide theoretical guarantees on solution quality.  Furthermore, the prior work uses locally optimal examples in the database, whereas our work uses globally optimal ones.

Our automated IK learning framework bears some similarity to the Lightning framework of Berenson et al~\cite{berenson2012robot} in which past plans are reused in parallel with planning from scratch.  In that work, past examples are retrieved based on nearest feature vector containing the start and end positions of the planning query, whereas our work is more general and accepts any P-parameters of the class of optimization problems.  Furthermore, Lightning uses past planning queries to generate new experience, while our framework uses a handful of user-driven queries to seed the database, but can then populate the database automatically thereafter. 

In the context of inverse kinematics, researchers from computer animation have used learning from human motion capture data~\cite{Grochow:2004:SIK:1186562.1015755,ho2013topology}.
In robotics, several authors have used machine learning methods to learn local inverse kinematics solutions for manipulator control in Cartesian space~\cite{souza2001learning,wang1999lagrangian}.  Some researchers are also able to incorporate obstacles~\cite{mao1997obstacle,zhang2004obstacle}.  By contrast, our work considers the global IK problem, which is more appropriate for determining goal configurations for motion and grasp planning. Perhaps the closest attempt to ours to address this problem is an approach that tries to discover structure in the IK problem space via clustering~\cite{demers1991learning}. We explore more fully some of the preliminary ideas in that work.

%The notion of ``primitive'' has been used in several contexts; the one used in mobile robots, autonomous vehicles and aircraft \cite{frazzoli2000robust,frazzoli2005maneuver,branicky2008path} typically connotes an optimal maneuver that is stored for later use.  Multiple primitives are connected to construct a longer path/trajectory. A similar connotation is used for footstep-based legged locomotion planners \cite{chestnutt2003planning,kuffner2001footstep,kuffner2005motion}. In our work, we use a different connotation in which each primitive is a complete solution of a problem instance. 

%Sampling-based motion planners, such as probabilistic road map(PRM) \cite{kavraki1996probabilistic}, rapidly exploring random tree (RRT) \cite{lavalle2000rapidly}, etc, are successful in computing feasible paths within a moderate amount of time. But they do not incorporate soft constraints such as motion smoothness, path length in the planning process, and generate low quality paths in general.  More recently, optimal motion planning methods do optimize soft constraints, but take a very long time to converge for high-dimensional problems, like those encountered in robotic manipulation.  On the other hand, trajectory optimization methods often converge quickly, but need to be initialized with a well-chosen initial trajectory in order to avoid local minima.

\section{Learning Global Optima framework}
\label{sec:statement} 

First, let us summarize the intuition behind our main results.  Consider a family of decision problems whose constraints and objectives themselves vary according to external parameters, such as the initial state of a trajectory, obstacle positions, target states, etc.  Since these parameters are not under the direct control of the robot, they are not {\em decision parameters} (denoted in this paper as $x$) under the control of the agent, but rather {\em P-parameters} (denoted in this paper as $\theta$).  Since the optimal solution $x^\star$ depends on the current values of the external parameters, such as a changing environmental obstacle causing the robot to take a different path, there is a relationship between P-parameters and optimal solutions. In essence, a P-parameter vector $\theta$ can be thought of {\em producing} an optimization problem that the robot must then solve to generate $x^\star$.  

In the context of robotics, P-parameters can include the robot's initial state, the query (e.g., a target location), and characteristics of the environment (e.g., the positions of obstacles).  The problem's decision variables may be diverse, such as a robot configuration, a representation of a trajectory to be executed, or the parameters of a feedback control policy.  

We intend to study the relationship between $\theta$ and $x^\star$ across the space of P-parameters, which is a set we call {\em problem space}. The concept of problem space is related to the notion of {\em task space} in IK or operational space control, and {\em feature space} in machine learning.  A problem space gives the range of possible problems encountered in practice, and we are interested in performing well across this entire space.  (In practice, it may be more appropriate to speak of a distribution over problems, but in this case we consider the range of likely problems.)

\subsection{Illustrative examples}

As an example, let us consider a family of IK problems. {\em Note that for the next two paragraphs we shall adopt traditional IK notation, which conflicts with the notation used elsewhere throughout this paper.}  Suppose a robot is asked to reach the IK objective $x_d = x(q)$, where $x(q)$ is the forward dynamics mapping from configurations $q$ to tasks $x$.  Here, we have $q$ being the decision variable which the IK solver is meant to solve, and $x_d - x(q) = 0$ the constraint. If we are now to think of the robot being asked to reach a range of IK objectives, we now have $\theta \equiv x_d$ being the P-parameter of this problem. If we are now to let $x_d$ range over the space of all possible or all likely tasks, this range of constraints gives the problem space $\Theta$. 

If we now have the robot required to simultaneously reach $x_d$ while avoiding an obstacle at position $p$.  We can encode this constraint by a distance inequality $d(q,p) \geq 0$ that measures how far the robot is at configuration $q$ from the object at position $p$.  Now, if $p$ were to also vary, we should add it to the P-parameter vector to obtain $\theta = (x_d,p)$.  If, on the other hand, only the $z$-coordinate of $p$ were to vary, we should only set $\theta = (x_d,p_z)$, treating the $x$- and $y$-coordinates of $p$ as constants in the constraint $d(q,p) \geq 0$.

\subsection{Summary of results}

\begin{figure}
\centering
\includegraphics[width=0.47\textwidth]{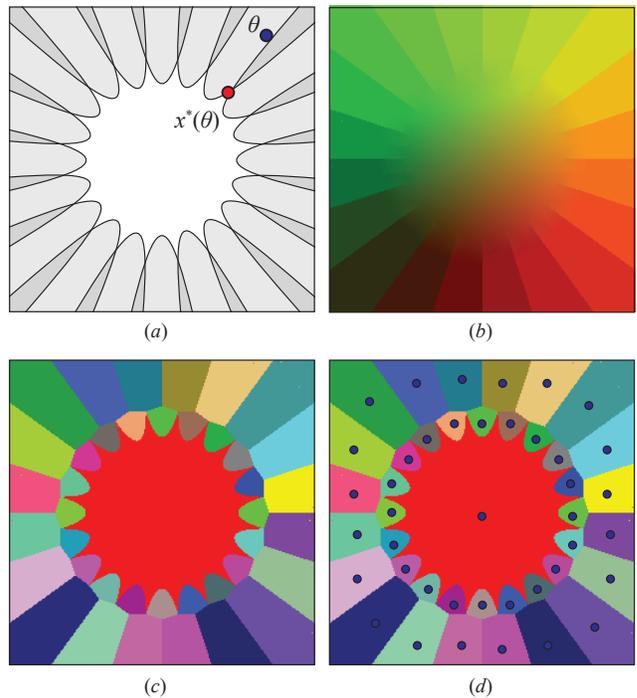}
\caption{(a) A simple 2D problem family in which the goal is to minimize the distance between the optimization parameter $x$ and the P-parameter $\theta$ subject to concave polynomial constraints. (b) The optimal solution function, with $x_1$ and $x_2$ plotted as red and green intensities, respectively.  It is piecewise smooth (best viewed in color).  (c) The problem space partitioned into regions of unique active set. (d) An ideal learner needs only to store one example in each region (circles).  It will globally solve a new problem using local optimization, starting from the example in the same region as that problem.}
\label{fig:StarProblem}
\end{figure}

We now state the intuition behind our main results, and illustrate them on the simple example of Fig.~\ref{fig:StarProblem}(a):
\begin{itemize}
\item If the mapping from P-parameters to constraints is continuous, then the mapping from $\theta$ to optimal solutions $x^\star(\theta)$ is a piecewise continuous mapping (Fig.~\ref{fig:StarProblem}(b)).  (Note that it is only a partial mapping if some $\theta$ define infeasible optimization problems.)
\item $x^\star(\theta)$ is continuous over connected regions of problem space in which the active set is unique and equal (Fig.~\ref{fig:StarProblem}(c)).  Moreover, it satisfies a Lipschitz bound if the active set Jacobian is nonsingular.
\item Let a {\em learner} consist of a database of examples and a subroutine for subselecting and adapting examples to a novel problem. We define a notion of ``goodness'' of a learner as the worst-case suboptimality of the adapted solution to a randomly drawn problem, relative to its true optimum.  Optimal goodness is achieved if the learner generates $x^\star(\theta)$ everywhere.
\item An idealized learner achieves optimal goodness if: 1) it stores an example in each contiguous region of problem space in which the active set is constant, 2) for a new problem specified by a P-parameter vector $\theta$, it retrieves an example in the same region as $\theta$, and 3) it uses local optimization, seeded from the example's solution and warm-started from its active set (Fig.~\ref{fig:StarProblem}(d)).
\item The worst-case number of examples for an ideal learner is the number of possible combinations of active constraints, which is at worst case exponential in problem size.
\end{itemize}
The remainder of this section and the next will formalize this intuition.

\subsection{Mathematical statement}
\label{sec:mathematical framework} 

In the proposed framework, we are given a class of optimization problems over an $n$-dimensional decision parameter $x$:
\begin{equation}
\begin{gathered}
\text{Given functions $f$, $g$, and $h$}\\
\text{Find $x \in \mathbb{R}^n$  s.t.}\\
f(x) \text{ is minimized,}\\
g(x) \leq 0,\text{ and} \\
h(x) = 0.
\end{gathered}
\label{eq:BasicOptimization}
\end{equation}
Here, the functions are defined with ranges $f : \mathbb{R}^n \rightarrow \mathbb{R}$, $g : \mathbb{R}^n \rightarrow \mathbb{R}^m$, and $h : \mathbb{R}^n \rightarrow \mathbb{R}^p$, and the equality and inequality are taken element-wise.  The functions $f$, $g$, and $h$ are in general nonconvex.  However, the feasible set and the objective function are required to be bounded.  The goal of global optimization is to produce an optimal solution $x$ if one exists, or to report infeasibility if none exists. 

A {\em problem space} is defined by variations of $f$, $g$, and $h$.  In this paper, variations in problems will be encapsulated in the form of a P-parameter variable $\theta$.  Specifically, the functions $f$, $g$, and $h$ will be themselves function of the parameter $\theta \in \Theta$, where $\Theta \subseteq \mathbb{R}^r$ is known as the problem space.  To make the dependence on $\theta$ explicit, we write $f(x) \equiv f(x,\theta)$, $g(x) \equiv g(x,\theta)$, and $h(x) \equiv h(x,\theta)$. (Here it is important that $f$, $g$, and $h$ are fixed functions of both $x$ and $\theta$ and not any external factors --- in other words, $\theta$ must capture all variability between problems.) We can now rewrite \eqref{eq:BasicOptimization} as:
\begin{equation}
\begin{gathered}
\text{Given P-parameter $\theta \in \Theta$,}\\
\text{Find $x \in \mathbb{R}^n$  s.t.}\\
f(x,\theta) \text{ is minimized over $x$},\\
g(x,\theta) \leq 0,\text{ and} \\
h(x,\theta) = 0.
\end{gathered}
\label{eq:CoparameterOptimization}
\end{equation}
With this definition, it is apparent that the only external parameter affecting the solution is the P-parameter $\theta$.  It is now possible to speak of the set of optimal solutions $x^\star(\theta)$, which is a deterministic function of $\theta$.  $x^\star(\theta)$ may be empty if there is no solution.  If it is nonempty, an optimal cost $f^\star(\theta)$ exists.  We refer to the problem of computing one such solution, or determining that none exists, as the problem $P(\theta)$.

As a final note, in most of this paper, we shall treat $m$, $n$, $p$, and $r$ as fixed.  In practice, however, it may be possible to share examples across problems of differing dimension.  For example, solutions to motion planning problems may be represented as a variable number of waypoints, with a set of decision variables and constraints for each waypoint.  The algorithm presented below still applies, but the analysis would require additional assumptions about problem structure. %In Sec.~\ref{sec:AutomaticLearning} we do address variable-size parameter vectors or constraints, but only in a na\"{i}ve way that maintains independent problem spaces and example databases for each size.  In other words, information is transferred only between problems in a single problem space, not between different problem spaces. 

\subsection{Database computation phase}

We assume the learner has access to compute a database of problem-solution pairs, each of which is known as an {\em example}.  Specifically, the database consists of $D = \{(\theta^{(i)},x^{(i)})\quad |\quad i=1,\ldots,N\}$ where $\theta^{(i)}$ is a P-parameter sampled from $\Theta$, and $x^{(i)}$ is an optimal solution in $x^\star(\theta^{(i)})$.  Later we will use {\em nil} to mark that no solution was found, but for now let us assume a database entirely consisting of successfully solved examples.  The distribution of $\theta^{(i)}$ should be relatively diverse, and should approximate the distribution of problems expected to be encountered in practice. The solutions are computed by some global optimization method during an extensive precomputation phase.

In practice, for generating example solutions, we resort to the use of metaheuristic global optimization techniques (e.g., random restarts, simulated annealing, or evolutionary algorithms).  However, their approximate nature means that it cannot be guaranteed that a precomputed solution is truly optimal, or that a precomputed failure is truly a failure.  As a result the database will contain some noise.  We will discuss the effects of noise, and methods for combating it, in subsequent sections.

\subsection{Query phase}
\label{sec:QueryPhase}

The query phase attempts to answer a novel query problem specified by P-parameter $\theta^\prime$.  We will analyze a learner that has access to two auxiliary functions:
\begin{enumerate}
\item Let $S(\theta)$ be a {\em selection function} that produces a subset of $k$ examples in $D$ for any problem $\theta \in\Theta$.  Usually these are assumed to be close in problem space to $\theta^\prime$, such as by finding its $k$-nearest neighbors given a distance metric $d(\theta,\theta^\prime)$ in $\Theta$ space. 
\item Let $\mathcal{A}(x,\theta,\theta^\prime)$ be the {\em adaptation function} that adapts an optimal solution $x$ for problem $\theta$ to another problem $\theta^\prime$.  The result of $\mathcal{A}$ is either a solution $x^\prime$ or {\em nil}, which indicates failure.
\end{enumerate}
The learner is assumed to proceed as follows:
\begin{itemize}
\item {\em Retrieval:} Select the problems $\theta^{(i_1)},\ldots, \theta^{(i_k)} \gets S(\theta^\prime)$
\item {\em Adaptation:} For each $j=1,\ldots,k$, run $\mathcal{A}(x^{(i_j)},\theta,\theta^\prime)$ to locally optimize the solution for the selected problems.  Return the best locally optimized solution, or {\em nil} if none can be found.
\end{itemize}
It may be possible to learn a different representation of the map from $\theta$ to $x$, e.g. by using neural networks or Gaussian processes.  However, $k$-NN more readily admits more formal analysis, and also has some benefits handling  discontinuities as we will show in the next section.

\section{Theoretical study}

The core assumption behind the learner is that two optimization problems with similar P-parameters will have similar solutions, and therefore the adaptation of the solution for one problem to the other is very likely to be successful.  So, if the database contains problems sampled sufficiently densely from $\Theta$, then any novel problem in $\Theta$ should be solvable by adapting one of those prior solutions.  This section justifies the argument formally, and yields some intuition about which regions of $\Theta$ are well-behaved and can be sampled sparsely, and which regions must be sampled more densely.

\subsection{Library goodness}

We first set out to formalize the concept of {\em good database}. Let us assume the database $D = \{(\theta^{(i)},x^{(i)})\quad |\quad i=1,\ldots,N\}$ consists entirely of exact, optimal solutions where $\theta^{(i)}$ is a P-parameter from $\Theta$, and $x^{(i)}$ is an optimal solution for $P(\theta^{(i)})$.  Let us also define $\sigma$ as a volumetric measure over $\Theta$.  Ideally, $\sigma$ should capture how likely a problem is encountered during online use.  

We will first need a preliminary definition.  Let $\alpha \geq 0$ be an arbitrary parameter. 
\begin{defn}
A problem $\theta^\prime \in \Theta $ is $(\alpha,S)$-{\em adaptable} iff there exists an example $(\theta,x) \in S(\theta^\prime)$ such that $\mathcal{A}(x,\theta,\theta^\prime) \neq nil$ and $f(\mathcal{A}(x,\theta,\theta^\prime),\theta^\prime) \le \alpha + f(x^\prime,\theta^\prime)$, where $x^\prime$ is an optimal solution for $P(\theta^\prime)$
\end{defn}
In other words, some primitive in $S(\theta^\prime)$ is able to solve the problem ``well'' --- up to the tolerance $\alpha$ --- using the adaptation function $\mathcal{A}$.  As $\alpha$ shrinks toward 0, the condition of $\alpha$-goodness requires that the problem be solved with quality increasingly approaching optimal.  We are now ready to state the main definition.
\begin{defn}
$D$ is an $(\alpha,\beta,S)${\em -good library} if the set of $(\alpha,S)$-adaptable problems in $\Theta$ has volume at least $1-\beta$ times the volume of $\Theta$.  Specifically, it requires that $\sigma(F) \geq (1-\beta) \cdot \sigma(\Theta)$ with $F = \{ \theta \in \Theta \quad | \quad \theta\text{ is }(\alpha,S)\text{-adaptable} \}$.  
\end{defn}
The parameter $\beta$ controls how many problems satisfy the $(\alpha,S)$-adaptability condition.  The implication is that when $\beta$ shrinks towards 0, the fraction of failed or poor-quality adaptations shrinks toward 0.  If $S(\theta)$ produces large subsets, the likelihood of covering a large region of $\Theta$ with $\alpha$ quality increases, but computational cost also increases.  The goal of our technique is to produce $D$ and $S$ such that $D$ is $(\alpha,\beta,S)$-good for relatively low values of $\alpha$ (indicating high quality) and $\beta$ (indicating high success rate) and small $|S(\theta)|$ (indicating low computational cost).

\subsection{Smoothness of the optimal objective, almost everywhere}
\label{sec:Smoothness}
We begin by examining the question of whether two optimization problems that are similar --- in the sense of having two similar P-parameters --- also have solutions of similar quality.  It will set us up in the following section to examine whether a solution to one problem can be locally optimized to optimal or near-optimal quality. This section makes heavy use of results in sensitivity analysis of the Karush-Kuhn-Tucker (KKT) conditions~\cite{Jongen1990,shapiro1985second}. 

Note, first, that optimal solutions to similar problems are {\em themselves} not necessarily similar even if the optimal cost is similar.  An example is given in Fig.~\ref{fig:OptimalToyExample}.  Even though the optimal cost function $f(\theta)$ is  continuous, the solution $x^\star(\theta)$ is not.  Note that in general the optimal cost may be discontinuous as well.

\begin{figure}
\centering
\includegraphics[width=0.45\textwidth]{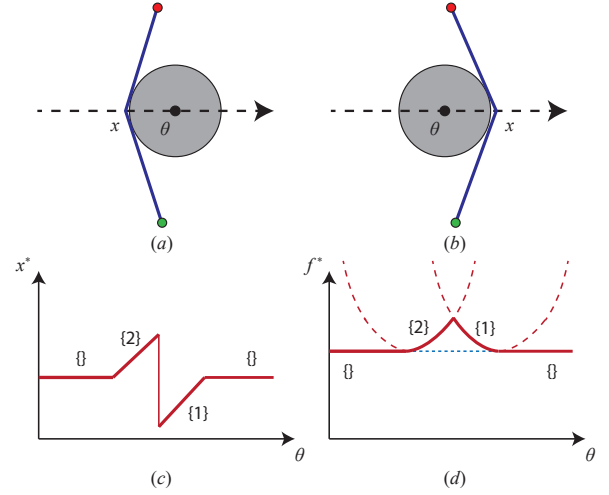}
\caption{A simple problem class where two points are to be connected with a shortest collision-free path.  The $x$ parameter controls the horizontal position of the path's midpoint while the $\theta$ P-parameter controls the horizontal position of the circular obstacle.  In (a) the optimal path is to the right of the obstacle, but if the obstacle were to move slightly to the position in (b) it moves to the left.  (c) The optimal solution $x^\star(\theta)$ is a deterministic but discontinuous function of $\theta$. (d) The optimal cost is a continuous function of $\theta$.  Dashed lines indicate local optima.}
\label{fig:OptimalToyExample}
\end{figure}

First, we will prove the following:
\begin{theorem}
\label{theorem:OptimalDerivative}
The optimal cost function $f^\star(\theta)$ has a defined derivative under the assumptions that $P(\theta)$ is feasible with a unique optimum $x$ and unique active set, and $f$, $g$, and $h$ have bounded first derivatives with respect to both $x$ and $\theta$. 
\end{theorem}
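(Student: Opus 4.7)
The plan is to invoke classical KKT sensitivity analysis, culminating in the envelope theorem. First I would form the Lagrangian
\begin{equation*}
\mathcal{L}(x,\lambda,\mu,\theta) = f(x,\theta) + \lambda^T g(x,\theta) + \mu^T h(x,\theta),
\end{equation*}
record the KKT conditions at the unique optimum $x^\star(\theta)$ with multipliers $\lambda^\star \ge 0$, $\mu^\star$, and denote the unique active set by $A$. Because the inactive inequalities satisfy $g_j(x^\star(\theta),\theta) < 0$ strictly and $g$ is continuous in both arguments, they remain inactive on some neighborhood $U$ of $\theta$, so inside $U$ the problem is effectively equality-constrained over $A$ together with $h$.

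Next I would apply the implicit function theorem to the reduced KKT system $\nabla_x \mathcal{L} = 0$, $g_A(x,\theta) = 0$, $h(x,\theta) = 0$ in the unknowns $(x,\lambda_A,\mu)$, obtaining a locally $C^1$ mapping $\theta \mapsto (x^\star(\theta),\lambda_A^\star(\theta),\mu^\star(\theta))$. Differentiating $f^\star(\theta) = f(x^\star(\theta),\theta)$ and using the stationarity and constraint equations to cancel the contributions of $\partial x^\star/\partial \theta$ yields the envelope identity
\begin{equation*}
\frac{\partial f^\star}{\partial \theta}(\theta) = \frac{\partial f}{\partial \theta}(x^\star,\theta) + \lambda_A^{\star T}\frac{\partial g_A}{\partial \theta}(x^\star,\theta) + \mu^{\star T}\frac{\partial h}{\partial \theta}(x^\star,\theta).
\end{equation*}
The assumed bounds on the first partials of $f$, $g$, $h$ with respect to $\theta$ then make the right-hand side finite, so $f^\star$ possesses a well-defined (and bounded) derivative on $U$.

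The main obstacle is that ``unique optimum with unique active set'' is strictly weaker than what the implicit function step needs: one really wants the KKT Jacobian to be nonsingular, i.e.\ a linear independence constraint qualification on $A \cup \{h\}$ together with a second-order sufficient condition, as in the Jongen and Shapiro references cited. Without such regularity, a weakly active constraint could drop out under perturbation and induce a kink in $f^\star$ even while one-sided directional derivatives continue to exist. I therefore expect the actual argument to either implicitly strengthen the hypothesis (LICQ plus strict complementarity at $x^\star(\theta)$) or to interpret ``defined derivative'' in the one-sided sense, deriving it from the perturbation bound on the value function rather than from smoothness of the primal-dual pair. A small auxiliary step, needed either way, is to verify that the multipliers $(\lambda^\star,\mu^\star)$ are themselves bounded, which follows from LICQ combined with the assumed gradient bounds on the constraints.
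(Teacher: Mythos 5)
Your proposal takes essentially the same route as the paper: the paper derives exactly your envelope identity, $\jac{f^\star}{\theta} = \jac{f}{\theta} + z^T \jac{j}{\theta}$ with $z$ the stacked multipliers and $j$ the stacked active constraints, by first-order Taylor expansion of the active-set KKT system --- your argument minus the explicit implicit-function-theorem justification that $x^\star(\theta)$ and the multipliers vary smoothly. Your caveat about the hypotheses being weaker than the differentiation step formally requires is accurate; the paper tacitly assumes nonsingularity of the active-set Jacobian (stating it only after the proof, when writing $z = -(\jac{j}{x}\jac{j}{x}^T)^{-1}\jac{j}{x}\jac{f}{x}^T$), so your reading of the theorem as implicitly assuming LICQ plus strict complementarity matches what the paper actually uses.
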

\begin{proof}
The KKT conditions must be satisfied at an optimum $x$.  That is, there must exist KKT multipliers $\lambda$ and $\mu$ such that
\begin{equation}
\begin{gathered}
\jac{f}{x}(x,\theta)^T + \jac{g}{x}(x,\theta)^T \mu + \jac{h}{x}(x,\theta)^T \lambda = 0 \\
g(x,\theta) \geq 0 \\
h(x,\theta) = 0 \\
\mu^T g(x,\theta) = 0 \\
\mu \geq 0.
\end{gathered}
\end{equation}
The set of indices for which the multipliers $\mu_i$ are nonzero are known as an {\em active} set.  For each index $i$ in an active set $A$, $g_i(x,\theta)$ is identically 0.  As a result, if we were to identify an active set, we can rewrite the KKT conditions as:
\begin{equation}
\begin{gathered}
\jac{f}{x}(x,\theta)^T + \jac{g_A}{x}(x,\theta)^T \mu_A + \jac{h}{x}(x,\theta)^T \lambda = 0 \\
g_A(x,\theta) = 0, \quad g_{\bar{A}}(x,\theta) \geq 0 \\
h(x,\theta) = 0 \\
\mu_{\bar{A}} = 0, \quad \mu_{A} > 0 
\end{gathered}
\end{equation}
where the subcript $A$ denotes extraction of rows in the active set, and the subscript $\bar{A}$ denotes extraction of rows in the inactive set.  It must be noted that there are some cases in which the active set is non-unique, and these cases may cause discontinuities in $f^\star(\theta)$.

Let us consider the case where $A$ is the unique active set.  If we lump $\mu_A$ and $\lambda$ into a multiplier vector $z$ and stack $g_A$ and $h$ into a vector field $j=\left[\begin{array}{c} g_A \\ h \end{array}\right]$, then the solution and KKT multipliers must satisfy the conditions
\begin{equation}
\begin{gathered}
\jac{f}{x}(x,\theta)^T + \jac{j}{x}(x,\theta)^T z = 0\\
j(x,\theta) = 0.
\end{gathered}
\label{eq:ActiveSetKKT}
\end{equation}

Let us consider an infinitesimally small shift of problem to $\theta^\prime = \theta + \Delta \theta$.  We examine the direction $\Delta x$ in which the optimal solution changes to $x^\prime = x + \Delta x$ while keeping the active set constant.  The optimal cost will shift by $\Delta f = f(x^\prime,\theta^\prime) - f(x,\theta) = \jac{f}{x} \Delta x + \jac{f}{\theta} \Delta \theta$ where the last term uses the first-order Taylor expansion and we have left the dependence on $x$ and $\theta$ implicit.

Replacing $\jac{f}{x}$ with the first equality in the active set KKT conditions \eqref{eq:ActiveSetKKT}, we get
\begin{equation}
\Delta f = - z^T \jac{j}{x} \Delta x + \jac{f}{\theta}\Delta \theta.
\label{eq:DeltaFExpansion}
\end{equation}
Applying Taylor expansion to the second KKT equality, we get
\begin{equation}
j + \jac{j}{x} \Delta x + \jac{j}{\theta}\Delta \theta = 0 
\end{equation}
which can be rewritten
\begin{equation}
\jac{j}{x} \Delta x  = - \jac{j}{\theta}\Delta \theta
\end{equation}
because $j =0 $ at the optimum. Finally, replacing this equation in \eqref{eq:DeltaFExpansion} we get
\begin{equation}
\Delta f = \left(\jac{f}{\theta} + z^T \jac{j}{\theta}\right)\Delta \theta.
\end{equation}
Since each of the derivatives in the above equation are bounded and $z$ is finite, $f^\star(\theta)$ is continuously differentiable with derivative $\jac{f^\star}{\theta} = \jac{f}{\theta} + z^T \jac{j}{\theta}$. 
\end{proof}

From \eqref{eq:ActiveSetKKT}, we can also express $z = -(\jac{j}{x} \jac{j}{x}^T)^{-1} \jac{j}{x} \jac{f}{x}^T$, where the inverse exists under the assumption that a unique solution exists for $x$.  The matrix $(\jac{j}{x} \jac{j}{x}^T)^{-1} \jac{j}{x}$ is actually the transpose of the pseudoinverse of $\jac{j}{x}$, and as a result the rate of change of $f^\star(\theta)$ is bounded when $\jac{j}{x}$ is far from singular. In the context of inverse kinematics, this occurs when the robot is in a high manipulability configuration.

This reasoning may be extended to two similar claims, which have somewhat more involved proofs and hence the details are omitted for the sake of brevity.
\begin{claim}
\label{claim:Lipschitz}
For {\em nearly all} $\theta$ (specifically, a subset of $\Theta$ with measure $\sigma(\Theta)$), there exists a neighborhood $\mathcal{N}(\theta,R)$ for some $R > 0$ over which the optimal cost function satisfies a Lipschitz bound.  The Lipschitz constant is a linear function of the inverse of the smallest singular value of the active set Jacobian over points in $\mathcal{N}(\theta,R)$. 
\end{claim}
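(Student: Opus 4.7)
The plan is to combine the pointwise derivative formula from Theorem~\ref{theorem:OptimalDerivative} with an implicit function theorem argument that promotes that pointwise regularity to regularity on an open neighborhood, and then to argue that the ``bad'' set where this construction fails has $\sigma$-measure zero, so that the good set covers $\Theta$ up to full measure.

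First I would characterize the bad set $\Theta_{\text{bad}} \subset \Theta$ as the union of the following problematic points: (i) $P(\theta)$ is infeasible or has non-unique optimum; (ii) strict complementarity fails, i.e., $\mu_i = 0$ for some $i \in A$, so the active set is ambiguous; (iii) the active-set Jacobian $\partial j/\partial x$ loses rank; or (iv) some inactive inequality $g_i(x^\star(\theta),\theta)$ with $i \notin A$ grazes zero. Under the smoothness hypotheses on $f,g,h$, each condition is a transversality or rank-drop failure, and, generically in $\theta$, carves out a closed subset of $\Theta$ of codimension at least one. A standard Sard-type / parametric transversality argument (e.g.\ following~\cite{Jongen1990}) then yields $\sigma(\Theta_{\text{bad}}) = 0$, so the good set $\Theta_{\text{good}} := \Theta \setminus \Theta_{\text{bad}}$ has measure $\sigma(\Theta)$.

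Next, for any $\theta \in \Theta_{\text{good}}$, I would apply the implicit function theorem to the active-set KKT system \eqref{eq:ActiveSetKKT}. Nonsingularity of $\partial j/\partial x$ together with strict complementarity implies the Jacobian of the KKT system with respect to $(x,z)$ is nonsingular, so there is a radius $R > 0$ such that $(x^\star,z)$ extends to smooth functions on $\mathcal{N}(\theta,R)$ with the same active set $A$. Theorem~\ref{theorem:OptimalDerivative} then applies pointwise throughout this neighborhood, and at every $\theta' \in \mathcal{N}(\theta,R)$ we have
\begin{equation*}
\left\|\frac{\partial f^\star}{\partial \theta}(\theta')\right\| \leq \left\|\frac{\partial f}{\partial \theta}\right\| + \|z(\theta')\|\,\left\|\frac{\partial j}{\partial \theta}\right\|.
\end{equation*}
Using the closed form $z = -(\partial j/\partial x\,\partial j/\partial x^T)^{-1}(\partial j/\partial x)(\partial f/\partial x)^T$ noted immediately after Theorem~\ref{theorem:OptimalDerivative}, we get $\|z\| \leq \|\partial f/\partial x\|/\sigma_{\min}(\partial j/\partial x)$. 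Taking the supremum over $\mathcal{N}(\theta,R)$ yields an estimate of the form $L(\theta,R) = c_1 + c_2/\sigma_{\min}^{\mathcal{N}}$, linear in the inverse of the smallest singular value of the active-set Jacobian on the neighborhood, where $c_1,c_2$ depend only on the (assumed bounded) partial derivatives of $f,g,h$. Integrating $\partial f^\star/\partial \theta$ along the line segment between any $\theta_1,\theta_2 \in \mathcal{N}(\theta,R)$ then gives $|f^\star(\theta_1)-f^\star(\theta_2)| \leq L(\theta,R)\,\|\theta_1-\theta_2\|$, which is the claimed Lipschitz bound.

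The main obstacle is the measure-zero argument in step one. Active-set changes naturally sit on codimension-one manifolds in $\Theta$, but in pathological constraint geometries (e.g., a constraint surface osculating another along a positive-measure slice of parameters) they could in principle fatten into positive-measure sets. The clean resolution is a generic regularity / Morse-type assumption on the parametric constraint system, which makes the KKT manifold smooth and the projection onto $\Theta$ transverse to the active-set stratification; the bad set is then a finite union of proper submanifolds of $\Theta$ and has $\sigma$-measure zero. The remaining ingredients---the implicit function theorem, the singular-value bound on $\|z\|$, and the line-segment integration---are routine and simply propagate the constants already used in Theorem~\ref{theorem:OptimalDerivative}.
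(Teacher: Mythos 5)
Your proposal follows essentially the same route the paper sketches: it bounds the multiplier norm via the pseudoinverse of the active-set Jacobian and integrates the derivative formula of Theorem~\ref{theorem:OptimalDerivative} over the neighborhood, arriving at a constant of exactly the form $L_{f,\theta} + L_{f,x}\,Q\,L_{j,\theta}$ with $Q$ the inverse of the smallest singular value, which is the bound the paper states. Your additional care on the measure-zero step (flagging that it rests on a nondegeneracy/transversality assumption) is consistent with, and somewhat more explicit than, the paper's treatment, which simply asserts that the discontinuity set has measure zero under nondegeneracy.
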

The claim requires that $f$ and $j$ also satisfy a Lipschitz bound, and that there is no discontinuity in $f^\star$ across the neighborhood (this excludes points $\theta$ of discontinuity, which form a set of measure 0).  The proof shows that the maximum change of $f^\star$ over the neighborhood is at most $R (L_{f,\theta} + L_{f,x} Q L_{j,\theta})$, where $L_{f,\theta}$, $L_{f,x}$, and $L_{j,\theta}$ are Lipschitz constants bounding the rate of change of $f$ with respect to theta, $f$ with respect to $x$, and $g$ and $h$ with respect to $\theta$.  The parameter $Q$, is a Lipschitz constant upper bounding the norm of the pseudoinverse of the active set Jacobian with respect to $x$ across the entire neighborhood (i.e., the inverse of the the minimum singular value of $\jac{j}{x}$).

The second claim relaxes the requirement of unique active sets for certain problems with ``tame'' nonunique active sets. 
\begin{claim}
\label{claim:DiscontinuousDerivatives}
If at $\theta$ the active set is nonunique, and 1) the active set Jacobian is nonsingular for each active set, and 2) the optimal solution $x$ is the same under each active set, then $f$ satisfies a Lipschitz condition.
\end{claim}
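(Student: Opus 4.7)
The plan is to reduce Claim~\ref{claim:DiscontinuousDerivatives} to finitely many applications of the sensitivity argument underlying Theorem~\ref{theorem:OptimalDerivative}, and then aggregate the resulting branches by taking a pointwise minimum. First, I would enumerate the active sets $A_1,\ldots,A_K$ at the common optimum $x$ of $P(\theta)$; since each active-set Jacobian is nonsingular, the implicit function theorem applied to the active-set KKT system \eqref{eq:ActiveSetKKT} yields, for each $k$, a neighborhood $\mathcal{N}(\theta,R_k)$ on which there is a smooth branch $(x_k(\theta^\prime),z_k(\theta^\prime))$ with $x_k(\theta)=x$. Exactly as in the proof of Theorem~\ref{theorem:OptimalDerivative}, the restricted cost $f_k(\theta^\prime)\equiv f(x_k(\theta^\prime),\theta^\prime)$ is continuously differentiable with derivative $\jac{f}{\theta} + z_k^T \jac{j_k}{\theta}$, and invoking Claim~\ref{claim:Lipschitz} branch by branch gives a Lipschitz constant $L_k$ controlled by the inverse of the smallest singular value of the $k$-th active-set Jacobian.

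Second, I would argue that on $\mathcal{N}(\theta,R)$ with $R\equiv \min_k R_k$ the true optimal cost satisfies $f^\star(\theta^\prime) = \min_k f_k(\theta^\prime)$. The ``$\le$'' direction is straightforward: inactive constraints are strictly satisfied at $x$, so by continuity each $x_k(\theta^\prime)$ remains feasible under small perturbations, making every $f_k(\theta^\prime)$ an achievable upper bound. The ``$\ge$'' direction requires that every nearby global optimum of $P(\theta^\prime)$ be a continuation of one of the already-enumerated KKT points; this is where I would appeal to a standard upper semicontinuity result for the solution multifunction under LICQ (e.g., Bonnans--Shapiro), using the hypothesis that all $A_k$ share the common optimum $x$ to conclude that no additional KKT branch can bifurcate at $\theta$. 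The minimum of finitely many $L_k$-Lipschitz functions is Lipschitz with constant $\max_k L_k$, giving the claim.

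The main obstacle is the ``$\ge$'' direction: Claim~\ref{claim:DiscontinuousDerivatives} explicitly allows the active set (hence the governing branch) to switch as $\theta^\prime$ moves away from $\theta$, producing the kinks in $f^\star$ that distinguish it from the smooth case of Theorem~\ref{theorem:OptimalDerivative}. One has to verify that such switches only correspond to a change in which of the $f_k$ achieves the minimum, not to the sudden appearance of a genuinely new optimum that evades the Lipschitz bound. Once this parametric-stability step is in place, everything else --- differentiability of each branch and the min-of-Lipschitz aggregation --- is a mechanical extension of the machinery already set up in the preceding results.
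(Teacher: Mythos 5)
The paper does not actually supply a proof of this claim: it states that the details ``are omitted for the sake of brevity'' and offers only the one-line intuition that the derivatives leading into and out of the discontinuity are both bounded, so a Lipschitz condition still holds. Your proposal is the natural elaboration of exactly that intuition --- finitely many active-set branches, each made Lipschitz by the implicit-function-theorem machinery of Theorem~\ref{theorem:OptimalDerivative} and Claim~\ref{claim:Lipschitz}, aggregated by a pointwise minimum with constant $\max_k L_k$ --- so in spirit it is the same approach, just carried further than the paper does. Two caveats. First, your ``$\le$'' step asserts that inactive constraints are strictly satisfied at $x$, but at a point of active-set nonuniqueness this is precisely what fails: a constraint in $A_j \setminus A_k$ is tight at $x$ with zero multiplier, so the branch $x_k(\theta^\prime)$ need not remain feasible on a full neighborhood; the representation should be $f^\star(\theta^\prime)=\min\{f_k(\theta^\prime) : x_k(\theta^\prime)\text{ feasible for }P(\theta^\prime)\}$, and one must check that at least one branch stays feasible on each side of the transition (this is exactly the ``addition or removal of an inequality constraint'' picture the paper invokes). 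Second, the ``$\ge$'' direction you flag --- that every nearby global optimum is a continuation of one of the enumerated KKT branches rather than a bifurcating new one --- is a genuine obligation that neither you nor the paper discharges; an appeal to upper semicontinuity of the solution map under LICQ plus uniqueness of the optimum $x$ at $\theta$ is the right tool, but it needs the additional hypothesis that the optimum is locally unique (or that $f^\star$ is continuous at $\theta$), which the claim implicitly assumes via condition 2. With those two repairs your argument is sound and is, as far as the paper's sketch reveals, the intended one.
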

These conditions are most often observed in the addition or removal of an inequality constraint, as seen in the first and third point of derivative discontinuity in Fig.~\ref{fig:OptimalToyExample}: the derivatives leading in and leading out of the discontinuity are both bounded, so the point still satisfies a Lipschitz condition.  

\subsection{Theoretical and practical goodness conditions}

Now let us turn to the original question of whether an example $(\theta,x)$ in the database may be adapted with high quality to a novel problem $P(\theta^\prime)$.  We impose one major requirement on the adaptation function $\mathcal{A}$: if $P(\theta)$ has a unique optimum $x$ with a certain active set, then $\mathcal{A}(x,\theta,\theta^\prime)$ will yield a solution at least as good as the solution to the optimization problem $P(\theta^\prime)$ {\em keeping the active set constant}.  As a consequence, if the optimal solution to $P(\theta^\prime)$ shares the same active set, then $\mathcal{A}(x,\theta,\theta^\prime)$ will yield an optimal solution.

Let us define a function $\mathcal{O}(\theta)$ that produces a set of all active sets that define optima of $P(\theta)$.  For infeasible problems, we can take the convention that $\mathcal{O}(\theta)=\{\}$.  If the problem statement \eqref{eq:CoparameterOptimization} is nondegenerate, then across most of the space --- that is, all of $\Theta$ except for a set of measure 0 ---  $\mathcal{O}(\theta)$ contains one active set.  We can now imagine $\Theta$ partitioned into equivalence classes $C_1,\ldots,C_M$ by connected regions with uniform $\mathcal{O}(\theta)$.  In the nondegenerate case, every region with nonzero volume corresponds to a unique active set. 

We can conceive of a hypothetical ``omniscient'' selection function that always yields an example in the same equivalence class as $\theta^\prime$ whenever $\theta^\prime$ has a unique solution.  (Of course, it is in general impossible to determine the active set without solving $P(\theta^\prime)$ in the first place.)  Hence, we claim that one could attain a perfect, $(0,0,S)$-good database under the conditions:
\begin{enumerate}
\item The problem statement \eqref{eq:CoparameterOptimization} is nondegenerate,
\item The database $D$ has at least one example in each equivalence class with nonzero volume,
\item And the selection function $S$ is ``omniscient,''
\end{enumerate}
Then for almost all $\theta^\prime$ except for a set of measure 0, $\theta^\prime$ is $(0,S)$-adaptable.  

Realistically, it is not possible to derive an omniscient selection function.  But, $k$-NN becomes an increasingly good approximation as the sampling of problem space grows denser and $k$ grows larger.  In the case where $S$ fails to produce a correct example everywhere, there may exist a relaxed $\beta$ such that the database is $(0,\beta,S)$-good.

Another issue is that the number of equivalence classes is potentially double exponential $2^{2^{K(m,n-p)}}$, where $K(a,b) = \sum_{i=0}^{b} {a \choose i}$.
If $n-p \geq m$  then $K(m,n-p) = 2^m$, but we can certainly say $K(a,b) \geq {a \choose \min(b,a/2)}$.  Moreover, since the constraints are nonlinear the number of connected partitions may be yet even higher.  With a smaller database than this, adaptations from one active set to another will incur some suboptimality.  It is then a question of what relaxed value of $\alpha$ we can expect to attain $(\alpha,\beta,S)$-goodness.  As a result of Claim~\ref{claim:Lipschitz}, if we restrict ourselves to problems in the ``well-behaved'' realm where the constraint Jacobian is far from singular, then a database with dispersion at most $R \leq \alpha / L$ will be $(\alpha,S)$-adaptable, where $L$ is the maximum Lipschitz coefficient in Claim~\ref{claim:Lipschitz} over the problems in the database.  Since each of these balls has volume $c_n R^n$ for constant $c_n = \frac{\pi^{n/2}}{\Gamma(n/2+1)}$, it is required that the database contain at least $\sigma(\Theta) L^n / (c_n \alpha^n)$ examples.

Hence in practice, to obtain an $(\alpha,\beta,S)$-good database for P-parameter spaces with large number of P-parameters $n$, we must include a vast number of examples, or be willing to relax $\alpha$ and $\beta$.  Moreover, it suggests that to obtain adaptations of uniformly high quality, the database should be sampled more densely in the regions where the active set Jacobian is near singular.

\section{Query implementation}

The basic LGO framework works fairly well as presented in Section~\ref{sec:QueryPhase}.  However, several parameters affect query performance in practice:
\begin{enumerate}
\item {\em Problem similarity metric.} Although similar problems have similar optimal solutions, optimal solution quality usually changes anisotropically in problem space as described in Sec.~\ref{sec:Smoothness}.  As a result, non-Euclidean distance metrics may be better for retrieving good examples. 
\item {\em Search strategy.} A brute-force NN search has $O(n)$ computational complexity, which becomes slow in large databases.  The use of fast NN data structures allows our method to scale to larger databases.  We use a ball tree in our implementation.  
\item {\em Number of neighbors} $k$ affects the robustness of the query phase.  Larger values of $k$ help combat the effects of limited database size and noise by allowing additional attempts at solving the problem.  This comes at the expense of greater computational cost for challenging queries. 
\item {\em Local Optimization Strategy.}  A straightforward approach applies a nonlinear optimizer like sequential quadratic programming (SQP).  But we consider faster approximate strategies below.
\item {\em Perturbations in local optimization.}  If feasibility is affected by variables that are not captured in the P-parameters, such as environmental obstacles, it is usually prudent to perturb the retrieved seeds before local optimization.  This is also helpful to address non-differentiable inequality constraints.
\end{enumerate}

We present a faster implementation, LGO-quick-query, in which we modify the adaptation step in two ways.  First, we sort the $x^{(i)}$'s in order of increasing distance $d(\theta^{(i)},\theta)$, and stop when the first solution is found.  If no solutions are found, failure is returned.  This allows the method to scale better to large $k$, since easy problems will be solved in the first few iterations.  Second, rather than using full local optimization, we simply project solutions onto the equality constraints $h(x,\theta)=0$ and those active inequalities that are met exactly at the prior example.  This method is often an order of magnitude faster than SQP, and provides sufficiently near-optimal feasible solutions.  

Specifically, given a prior example $(x^{(i)},\theta^{(i)})$ we detect the set $A$ of constraints in $P(\theta^{(i)})$ active at $x^{(i)}$, and then formulate the active set equality $g_A(x,\theta)=0$, $h(x,\theta)=0$ for the new problem. Starting at $x^{(i)}$, we then solve for a root $j(x^\prime,\theta^\prime) = 0$ via the Newton-Raphson method.  Bound constraints, e.g., joint limits, are also efficiently incorporated.  Because Newton-Raphson takes steps with least squared norm, it approximately minimizes $\|x - x^\prime\|^2$.  Although locally optimizing the objective function $f$ may produce better solutions, the strategy of keeping the solution close to the start will still retain the asymptotic goodness properties of the database.  

Pseudocode for this technique is as follows:

\begin{algorithm}
\caption{LGO-quick-query$(\theta,D)$}
\label{alg:QuickQuery}
\begin{algorithmic}[1]
\State Find the $k$-nearest neighbors $\theta^{(i_1)},\ldots,\theta^{(i_k)}$ in $D$, sorted in order of increasing $d(\theta^{(i_j)},\theta)$
\For{$j=1,...,k$} 
  \State $A \gets $ActiveSet$(x^{(i_j)},\theta^{(i_j)})$
  \State Simultaneously solve $g_A(x,\theta)=0$, $h(x,\theta)=0$ using \par
        \hskip\algorithmicindent the Newton-Raphson method, initialized at $x^{(i_j)}$ 
  \If{successful} \Return the local optimum $x$  \EndIf
\EndFor
\State \Return $nil$
\end{algorithmic}
\end{algorithm}

\subsection{Handling infeasible problems}

When the problem space contains many infeasible problems that may be drawn in practice, queries for infeasible problems are expensive because they always performs $k$ failed local optimizations per query.  In some cases it may be preferable to quickly terminate on infeasible problems.  LGO can be easily adapted to predict infeasible problems and avoid expending computational effort on them. 

We permit the database $D$ to contain infeasible problems, whose solutions are marked as $nil$.  If a large fraction of retrieved solutions for a query problem $\theta$ are $nil$, then it is likely that $\theta$ is infeasible as well. More formally, if $k'$ denotes the number of feasible examples out of the retrieved set, we use a confidence score PFeasible$(k')$ that determines how likely the problem is to be feasible given $k'$.  If PFeasible$(k')$ falls below a threshold, then we predict $\theta$ as being infeasible and do not expend further effort.  Otherwise, it is likely to be feasible but the database does not have sufficient coverage.  In this case we fall back to global optimization. 

\begin{algorithm}
\caption{LGO-query-infeasible$(\theta,D)$}
\label{alg:QueryInfeasible}
\begin{algorithmic}[1]
\State Find the $k$-nearest neighbors $\theta^{(i_1)},\ldots,\theta^{(i_k)}$ in $D$, sorted in order of increasing $d(\theta^{(i_j)},\theta)$.
\For{$j=1,...,k$}
  \If {$x^{(i_j)} \neq nil$} 
    \State Solve $g_A(x,\theta)=0$, $h(x,\theta)=0$ starting from $x^{(i_j)}$
    \If {successful}  \Return the local optimum $x$ \EndIf
  \EndIf
\EndFor
\State $k' \gets |\{j \in \{1,2,.\ldots,k\} \quad | \quad x^{(i_j)} \neq nil \}|$
\If {PFeasible$(k') > \tau$} \Return Global-Optimization($\theta$)
\EndIf
\State \Return $nil$
\end{algorithmic}
\end{algorithm}

To generate PFeasible, we use leave-one-out (l.o.o.) cross validation to estimate the empirical probability that the problem is infeasible given that $k'$ out of $k$ nearest neighbors are feasible.  

The {\em feasibility confidence threshold} $\tau$ should be chosen to trade off against the competing demands of average query time and incorrect predictions of problem infeasibility.  A value $\tau=1$ will never fall back to global optimization, while $\tau=0$ always falls back.  A high value leads to more consistent running times but slightly lower success rate.

\subsection{Database self-population and lifelong learning}

Our implementation also contains a ``lifelong learning'' mode that allows the database to self-populate given a handful of example problems.  A separate background thread generates more examples to the database without the user making queries.  To do so, it explores the problem space by random sampling in an automatically-determined, growing range of P-parameters, and running the global optimizer.  The background thread is limited to some maximum CPU load (30\% in our implementation). 

An important problem in lifelong learning is to determine when to stop adding problems to the database, as well as choosing which problems should be retained.  First, a size limit on the database is provided, currently set to 10,000,000 examples.  Second, a new problem is not added if adaptation from a prior example is successful and does not lose a certain amount of quality (determined by a threshold $\alpha$).  Finally, we have a low threshold $\tau_2$ (by default 0.02) whereby a new problem is not added if it has been determined that at least $\tau_2$ fraction of examples with exactly $k'$ out of $k$ feasible neighbors are feasible.  Typically these rejections only occurs with $k'=0$ or other small number, and once the database has grown to appreciable size.  We do not ``forget'' existing problems to make room for new ones, but that would be a useful feature for future work.

To communicate to the background thread, we introduce a shared list Backburner that contains queries that are predicted to be infeasible.  After the query thread quickly predicts failure, it passes them to the background thread, which will then re-evaluate these queries using global optimization to test whether they are truly infeasible.  Pseudocode for the the query and background thread are as follows:

\begin{algorithm}
\caption{LGO-lifelong-query$(\theta,D)$}
\label{alg:QueryLifelong}
\begin{algorithmic}[1]
\State $x \gets $LGO-query-infeasible$(\theta,D)$
\If {$x$ was produced by Step 7} 
   \State Set $D \gets D \cup \{ (\theta,x) \}$
\EndIf
\If {$x = nil$ was produced by Step 8 \par
        \hskip\algorithmicindent with $PFeasible(k') < \tau_2$}
  \State Call Push(Backburner,$\theta$)
\EndIf  
\State \Return $x$
\end{algorithmic}
\end{algorithm}

\begin{algorithm}
\caption{LGO-background-task} 
\label{alg:QueryBackground}
\begin{algorithmic}[1]
\If {Backburner is empty}
  \State Sample $\theta \sim  U(\epsilon-$BoundingBox$(D))$
\Else
  \State $\theta \gets$ Pop(Backburner)
\EndIf
\State $x \gets $Global-Optimization($\theta$)
\State $D \gets D \cup \{(\theta,x)\}$
\State Go to Step 1
\end{algorithmic}
\end{algorithm}

Step 1 of LGO-background task computes the axis-aligned range of P-parameter space spanned by feasible examples in the database BoundingBox$(D) = [\theta_{1,min},\theta_{1,max}]\times \cdots \times [\theta_{r,min},\theta_{r,max}]$. and then expands it by a constant fraction, plus a small constant amount.  This box is cached and only updated when a new example is added to $D$.  New problem samples are drawn uniformly from the expanded range.

\section{Application to Inverse Kinematics}
\label{sec:Implementation}

Here we describe an application of our implementation to the classic problem of IK.  The resulting solver combines the reliability of global optimization with the speed of local optimization.  It improves upon prior techniques by automatically handling optimality criteria, kinematic redundancy, collision avoidance, and prediction of infeasible IK queries. 

The solver is highly customizable; it accepts arbitrary robots specified by Universal Robot Description Format (URDF) files, static environment geometries given as CAD models, IK constraints, and additional user-defined feasibility constraints and objective functions.  Collision detection is performed using the Proximity Query Package (PQP)~\cite{GLM96}. Cost functions and additional constraints are specified as arbitrary Python functions that accept a configuration and optional additional arguments and return a real number.  Each IK problem specifies one or more single-link IK constraints, optional joint limits, an optional array of the movable set, and any additional arguments of custom functions.

\subsection{Problem specification}

Specifically, the user provides $s\geq 0$ IK constraints parameterized by $\theta_{IK,1},\ldots,\theta_{IK,s}$, a cost function $f(q,\theta_f)$, an optional custom inequality $g(q\theta_g) \leq 0$, and joint limits $q_{min}$ and $q_{max}$.  (Note: $f$ and/or $g$ do not need to be parameterized in which case $\theta_f$ and/or $\theta_g$ are {\em nil}. )  The optimization problem to be solved is:
\begin{equation}
\begin{gathered}
\text{Given $\theta=(\theta_f,\theta_g,\theta_{IK})$,} \\
\text{Minimize over $q$ } f(q,\theta_f) \text{ s.t.}\\
g(q,\theta_{g}) \leq 0 \\
E_{IK}(q,\theta_{IK}) = 0\\
q_{min} \leq q \leq q_{max} \\
C(q) = \text{ false}
\end{gathered}
\end{equation}
where $E_{ik}$ is the error function for the IK constraints and $C(q)$ returns whether the robot has self-collision or environmental collision at $q$.  $C$ is a nondifferentiable constraint and is implemented during optimization by setting the objective function to $\infty$ when in collision.

\subsection{Database learning}
\label{sec:IKDatabaseLearning}
To generate a database, an axis-aligned range of $\theta$ is sampled uniformly at random.  To find global optima, a modified random-restart algorithm is used.  First, we use a Newton-Raphson iterative IK solver with 100 random initial configurations to find configurations that satisfy IK and joint limit constraints.  The IK solution with lowest cost, if one exists, is then used to seed an local SQP-based optimization.  The training procedure ranges from approximately 10\,ms per example for the robot in Fig.~\ref{fig:tx90pointcomparison} to 100\,ms per example for the robot in Fig.~\ref{fig:baxter}.  Our experiments find that this technique is an order of magnitude faster than na\"{i}ve random restart local optimization.

To produce a distance metric, we have experimented with both Euclidean distance metric, as well as learned Mahalanobis distances that can correct for poorly scaled problem spaces.  Our implementation can be configured to perform online metric learning using the LogDet update method of \cite{Jain2008}.

\subsection{Experiments}

\begin{figure*}[tbp]
\centering
\includegraphics[width=0.97\textwidth]{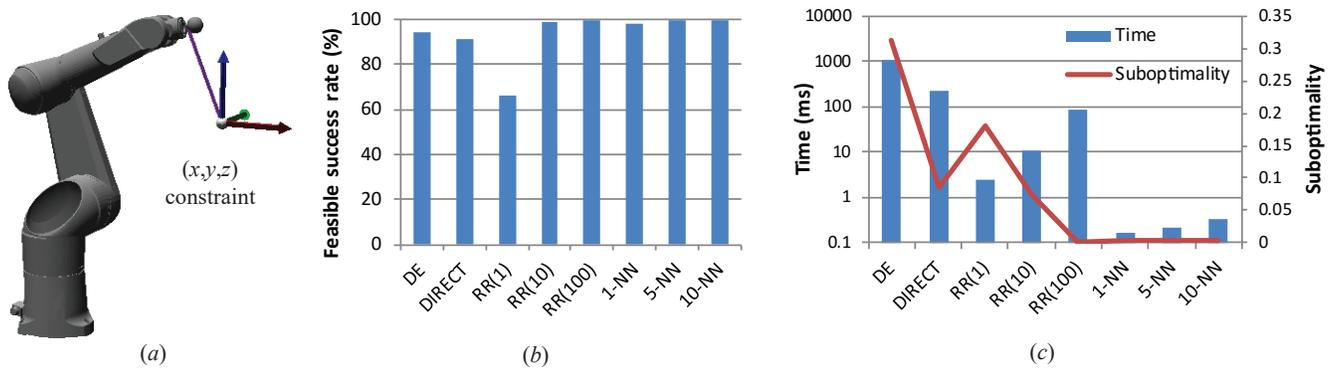}
\caption{(a) A redundant position-constrained IK problem (3 P-parameters) on a 6DOF industrial robot. (b) Comparison of success rate on a test set of 1,000 known feasible problems, between existing global optimization methods DIRECT and differential evolution (DE), the $N$-random restart method RR($N$), and $k$-NN LGO with 100,000 examples (higher values are better). (c) Comparison on average computation time, in ms, and suboptimality (lower values are better).  Time is shown on a log scale.}
\label{fig:tx90pointcomparison}
\end{figure*}

\begin{figure*}[tbp]
\centering
\includegraphics[width=0.97\textwidth]{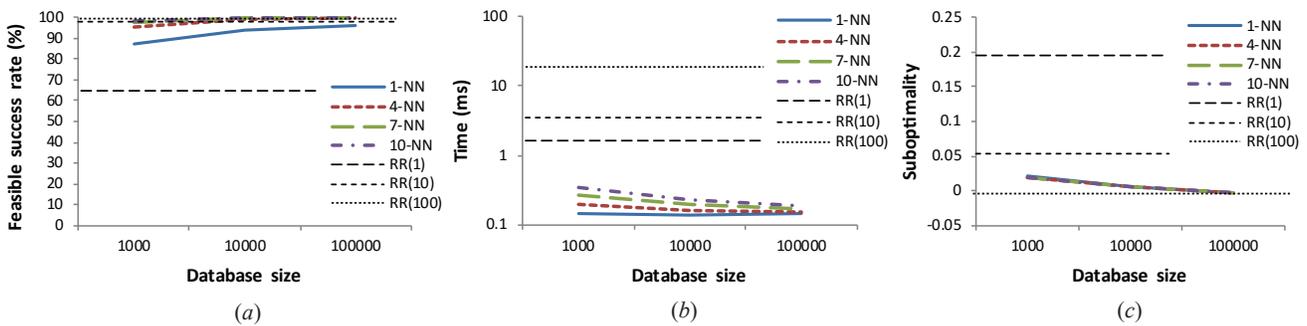}
\caption{Learning curves for LGO on the example of Fig.~\ref{fig:tx90pointcomparison} as the database size and number of neighbors $k$ varies, comparing (a) success rate (b) running time, and (c) suboptimality.  For reference, the performance of RR($N$) is shown as horizontal lines.}
\label{fig:tx90pointlearning}
\end{figure*}

\begin{figure*}[tbp]
\centering
\includegraphics[width=0.97\textwidth]{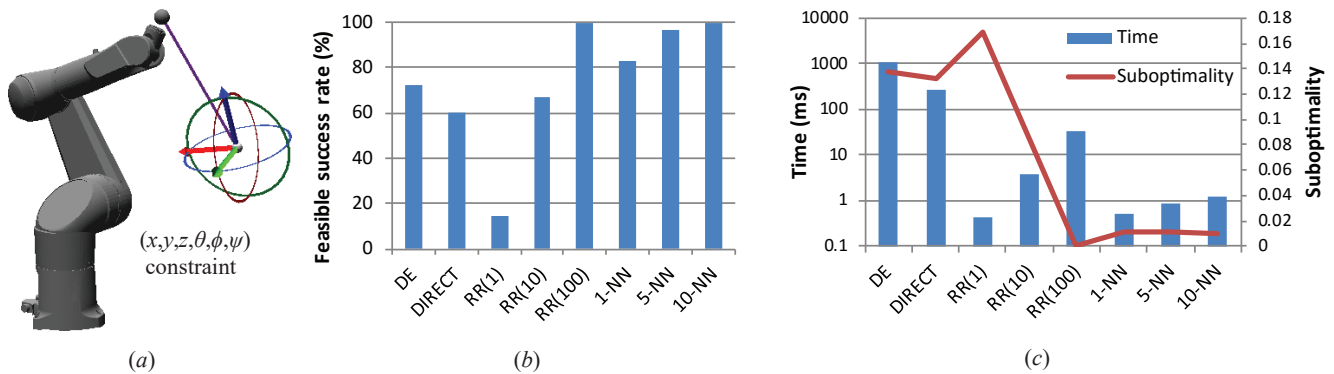}
\caption{(a) A position- and orientation-constrained IK problem (6 P-parameters) on a 6DOF industrial robot. Here LGO is tested with a 1,000,000 example database.  (b) Success rate. (c) Running time and suboptimality.}
\label{fig:tx90xformcomparison}
\end{figure*}

\begin{figure*}[tbp]
\centering
\includegraphics[width=0.97\textwidth]{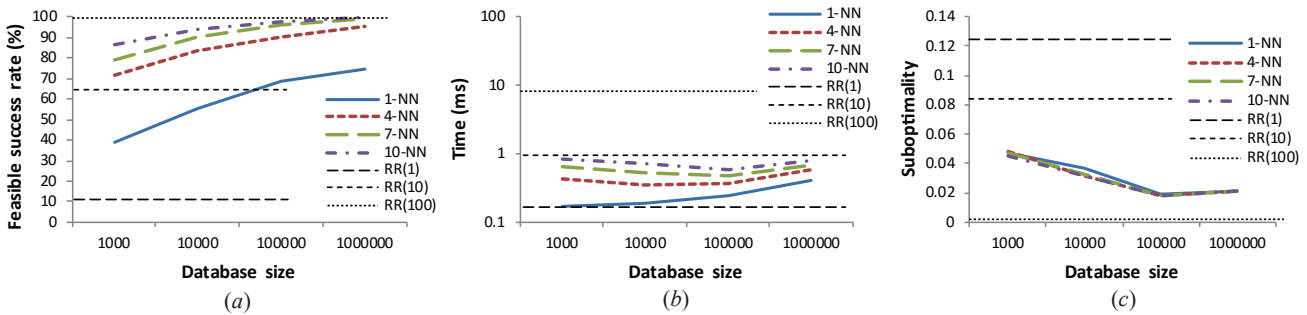}
\caption{Learning curves for LGO on the example of Fig.~\ref{fig:tx90xformcomparison} as the database size and number of neighbors $k$ varies, comparing (a) success rate (b) running time, and (c) suboptimality.}
\label{fig:tx90xformlearning}
\end{figure*}

The first experiments consider the effects of database size and selection technique for both redundant and nonredundant IK problems on an industrial robot. The cost function penalizes configurations near joint limits:
\begin{equation}
f(q) = -\sum_{i=1}^n \min(q_i-q_{min,i},q_{max,i}-q_i)^2.
\end{equation}
For each experiment, we generated a database as described in Sec.~\ref{sec:IKDatabaseLearning}, and then independently generated an test set of 1000 problems.  We disabled infeasibility prediction (i.e., set $\tau=0$) for all these experiments. 

First we consider a position-constrained problem where the 3 P-parameters of the end-effector position were varied.  Fig.~\ref{fig:tx90pointcomparison} compares the performance of the proposed LGO method, with varying numbers of neighbors $k$ and a fixed database of $|D|=100,000$ examples.  LGO is compared against the DIRECT algorithm~\cite{jones2001direct}, the metaheuristic global optimization technique differential evolution (DE), and a random-restart method RR($N$) with $N$ random restarts.  A final ``cleanup'' local optimization is run to improve the quality of the solution produced by DIRECT and DE.  The RR method is implemented as described in Sec.~\ref{sec:IKDatabaseLearning}, which is already highly tuned to this IK problem: each restart runs a Newton-Raphson technique to solve for the IK, then if successful, runs SQP.

Clearly, the off-the-shelf global optimizers are not competitive with RR($N$).  LGO outperforms even RR(1) in speed, and begins to outperform the success rate and solution quality of RR(100) at $|D|=100,000$ and $k=10$.  Compared to RR(100), LGO is two orders of magnitude faster. Fig.~\ref{fig:tx90pointlearning} illustrates the learning curves of LGO on this problem.

Fig.~\ref{fig:tx90xformcomparison} gives results for a position and orientation-constrained problem.  This is a nonredundant problem with a 6-D P-parameter space, and the robot has up to four IK solutions (elbow up/down, wrist up/down).  The IK rotation matrix is encoded via a 3D exponential map representation, and for the NN distance metric these P-parameters are scaled by $1/2\pi$.  We test LGO with training sets up to $|D|=1,000,000$ and varying values of $k$.  Again, we see that off-the-shelf methods are not competitive, and LGO is much faster than other techniques while still obtaining close to optimal results.

The learning curves, illustrated in Fig.~\ref{fig:tx90xformlearning}, show that LGO requires more data to get similar success rates to RR(100), only reaching parity at $|D|=1,000,000$. This result is consistent with the theoretical prediction that more examples are needed in higher dimensional P-parameter spaces to reach a desired level of quality.  At this point it is over 10 times faster than RR(100).  An unintuitive result is that RR($N$) is significantly faster in this case than the redundant case; the rationale is that since the problem is nonredundant, the SQP optimizer quickly terminates because it cannot make progress to improve the objective function.

Another issue is that the problem space includes axes of different units (meters for position vs. radians for orientation).  For such poorly-scaled problem spaces, we found that metric learning produced a Mahalanobis distance metric similar to our ad-hoc weighting of $1/2\pi$.  Compared to unweighted Euclidean distance, the learned metric produced a consistent, small boost in success rate (Fig.~\ref{fig:MetricLearning}).  Suboptimality and computation time were not significantly affected.

\begin{figure}[tbp]
\centering
\includegraphics[width=0.6\linewidth]{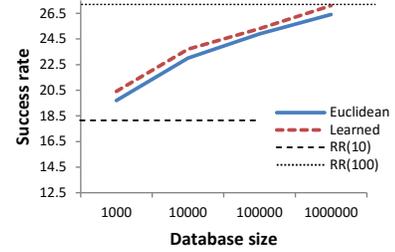}
\caption{On the problem of Fig.~\ref{fig:tx90xformcomparison}, the use of metric learning produced approximately a 2--3\% boost in success rate compared to Euclidean distance, using $k=4$ neighbors.}
\label{fig:MetricLearning}
\end{figure}

\subsection{Experiment}

\begin{figure}
\centering
\includegraphics[width=0.6\linewidth]{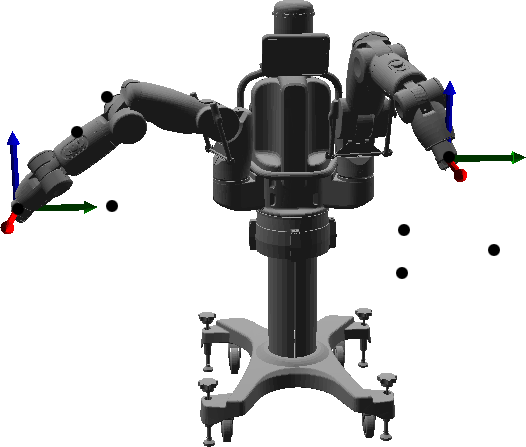}
\includegraphics[width=0.45\linewidth]{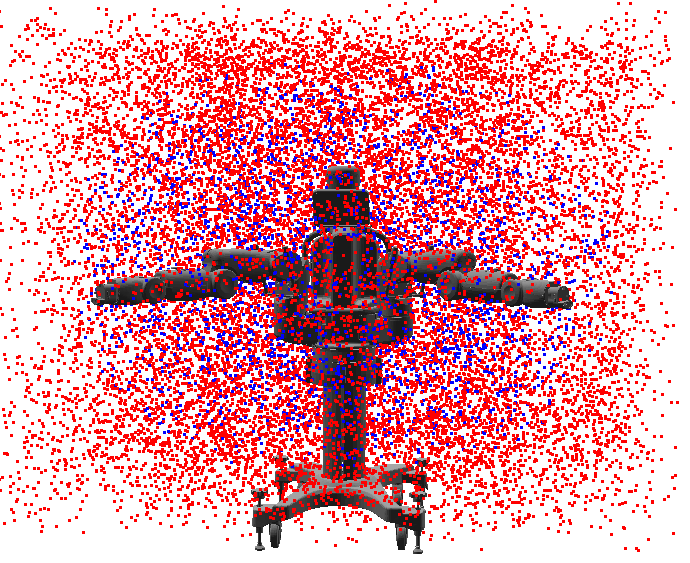}
\includegraphics[width=0.45\linewidth]{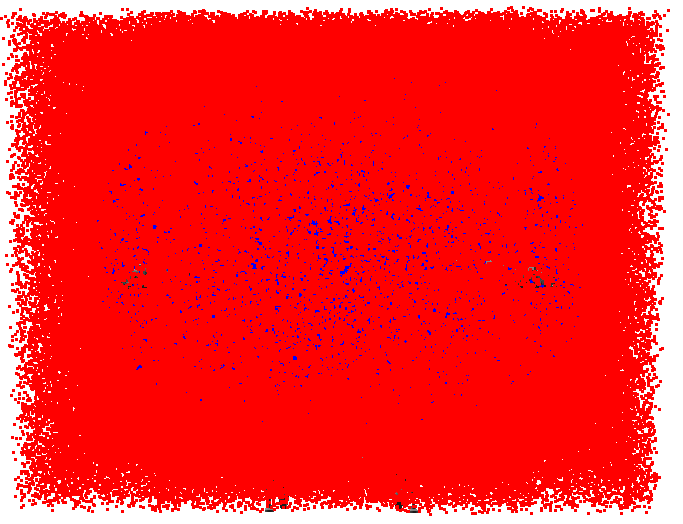}
\caption{The automatic database example with a 15-DOF robot and dual-arm position constraints.  Four seed IK queries are marked.  The IK endpoints of the database after 1h of background computation at 30\% CPU usage (approximately 10,000 examples).  The IK endpoints after 24h of computation (approximately 250,000 examples).}
\label{fig:baxter}
\end{figure}

\begin{figure}
\centering
\includegraphics[width=0.75\linewidth]{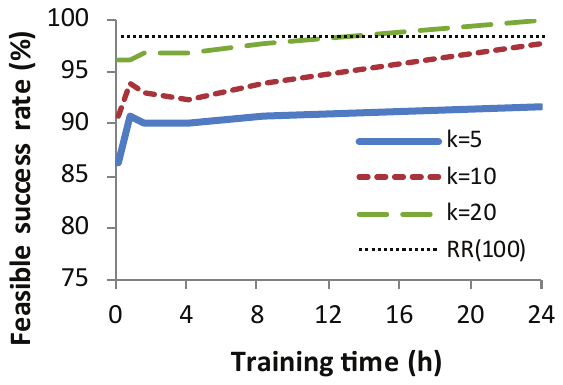}
\caption{Learning curves for the automated learning procedure on the example of~Fig.~\ref{fig:baxter}.}
\label{fig:baxter-learning}
\end{figure}

Fig.~\ref{fig:baxter} shows an example of the automatic learning procedure run on a problem with the Rethink Robotics Baxter robot with two position-constrained end effectors, for a 6D P-parameter space.  Four seed queries were provided by hand, but no other information was provided to help the system identify a feature representation or feature ranges.  It correctly identified the 6 P-parameters, and after 1 hour of background computation, the method generated a distribution of feasible problems shown in Fig.~\ref{fig:baxter}, center. The procedure was then left to learn for 24 hours, yielding approximately a quarter million examples.  Fig.~\ref{fig:baxter-learning} shows the learning curve for this training process, evaluated on a holdout testing set of 1,000 examples, 130 of which are feasible.  The resulting LGO method with $k=20$ performed more reliably than RR(100), with slightly higher solution quality, and 20 times faster (approximately 5\,ms per query compared to 100\,ms).

\section{Conclusion}

We presented an experience-driven framework for global optimization in families of related problems. Our work is an attempt to answer some fundamental questions relating optimization problem structure to the number of examples needed to attain a given quality.  First, we highlight the fact that the problem-optimum map is in general only piecewise continuous, which motivates the use of $k$-nearest neighbors approaches rather than function approximators in our implementation. Our results suggest that  the approach is practically sound in that a finite number of examples is needed to achieve a bounded level of ``goodness''.  However, the required database size depends exponentially on the problem dimensionality in the worst case.  Nevertheless, in some domains it is practical to generate databases of millions of examples, and in an inverse kinematics example problem, our implementation yields 1-5 orders of magnitude faster performance than the off-the-shelf global optimization methods, with little sacrifice in quality. 

Future work may improve the analysis by uncovering properties of problem spaces that are more easily learned, such as problem families where global optima have larger basins of attraction (e.g., convex optimization problems).  It may also be possible to bound the number of possible active sets at global optima based on the order of each constraint, or other properties of the problem.  We also have not yet considered the question of practical methods for distributing examples such that each example ``covers'' a large region of problem space with good adaptations, which would allow us to maximize performance for a given database size.

%\section*{Acknowledgments}

%% Use plainnat to work nicely with natbib. 

\bibliographystyle{plainnat}
\bibliography{references}

\begin{thebibliography}{23}
\providecommand{\natexlab}[1]{#1}
\providecommand{\url}[1]{\texttt{#1}}
\expandafter\ifx\csname urlstyle\endcsname\relax
  \providecommand{\doi}[1]{doi: #1}\else
  \providecommand{\doi}{doi: \begingroup \urlstyle{rm}\Url}\fi

\bibitem[Berenson et~al.(2012)Berenson, Abbeel, and
  Goldberg]{berenson2012robot}
Dmitry Berenson, Pieter Abbeel, and Ken Goldberg.
\newblock A robot path planning framework that learns from experience.
\newblock In \emph{IEEE Intl. Conf. on Robotics and Automation}, pages
  3671--3678, 2012.

\bibitem[Bohg et~al.(2014)Bohg, Morales, Asfour, and Kragic]{6672028}
J.~Bohg, A.~Morales, T.~Asfour, and D.~Kragic.
\newblock Data-driven grasp synthesis: A survey.
\newblock \emph{Robotics, IEEE Transactions on}, 30\penalty0 (2):\penalty0
  289--309, April 2014.
\newblock ISSN 1552-3098.
\newblock \doi{10.1109/TRO.2013.2289018}.

\bibitem[Cassioli et~al.(2010)Cassioli, Di~Lorenzo, Locatelli, Schoen, and
  Sciandrone]{Cassioli2010}
A.~Cassioli, D.~Di~Lorenzo, M.~Locatelli, F.~Schoen, and M.~Sciandrone.
\newblock Machine learning for global optimization.
\newblock \emph{Computational Optimization and Applications}, 51\penalty0
  (1):\penalty0 279--303, 2010.
\newblock ISSN 1573-2894.
\newblock \doi{10.1007/s10589-010-9330-x}.
\newblock URL \url{http://dx.doi.org/10.1007/s10589-010-9330-x}.

\bibitem[Chestnutt et~al.(2003)Chestnutt, Kuffner, Nishiwaki, and
  Kagami]{chestnutt2003planning}
Joel Chestnutt, James Kuffner, Koichi Nishiwaki, and Satoshi Kagami.
\newblock Planning biped navigation strategies in complex environments.
\newblock In \emph{IEEE Intl. Conf. on Humanoid Robotics, Munich, Germany},
  2003.

\bibitem[DeMers and Kreutz-Delgado(1991)]{demers1991learning}
David DeMers and Kenneth Kreutz-Delgado.
\newblock Learning global direct inverse kinematics.
\newblock In \emph{Neural Information Processing Systems}, pages 589--595.
  Citeseer, 1991.

\bibitem[Goldfeder and Allen(2011)]{Goldfeder2011}
Corey Goldfeder and Peter~K. Allen.
\newblock Data-driven grasping.
\newblock \emph{Autonomous Robots}, 31\penalty0 (1):\penalty0 1--20, 2011.
\newblock ISSN 1573-7527.
\newblock \doi{10.1007/s10514-011-9228-1}.
\newblock URL \url{http://dx.doi.org/10.1007/s10514-011-9228-1}.

\bibitem[Gottschalk et~al.(1996)Gottschalk, Lin, and Manocha]{GLM96}
S.~Gottschalk, M.~Lin, and D.~Manocha.
\newblock {OBB}-tree: A hierarchical structure for rapid interference
  detection.
\newblock In \emph{ACM SIGGRAPH}, pages 171--180, 1996.

\bibitem[Grochow et~al.(2004)Grochow, Martin, Hertzmann, and
  Popovi\'{c}]{Grochow:2004:SIK:1186562.1015755}
Keith Grochow, Steven~L. Martin, Aaron Hertzmann, and Zoran Popovi\'{c}.
\newblock Style-based inverse kinematics.
\newblock In \emph{ACM SIGGRAPH 2004 Papers}, pages 522--531, New York, NY,
  USA, 2004. ACM.
\newblock \doi{10.1145/1186562.1015755}.
\newblock URL \url{http://doi.acm.org/10.1145/1186562.1015755}.

\bibitem[Hauser et~al.(2006)Hauser, Bretl, Harada, and Latombe]{hauser06}
K.~Hauser, T.~Bretl, K.~Harada, and J.-C. Latombe.
\newblock Using motion primitives in probabilistic sample-based planning for
  humanoid robots.
\newblock In \emph{Intl. Workshop on the Algorithmic Foundations of
  Robotics(WAFR)}, 2006.

\bibitem[Ho et~al.(2013)Ho, Shum, Cheung, and Yuen]{ho2013topology}
Edmond~SL Ho, Hubert~PH Shum, Yiu-ming Cheung, and Pong~C Yuen.
\newblock Topology aware data-driven inverse kinematics.
\newblock \emph{Computer Graphics Forum}, 32\penalty0 (7):\penalty0 61--70,
  2013.

\bibitem[Jain et~al.(2008)Jain, Kulis, Dhillon, and Grauman]{Jain2008}
Prateek Jain, Brian~J. Kulis, Inderjit~S. Dhillon, and Kristen Grauman.
\newblock Online metric learning and fast similarity search.
\newblock In \emph{Neural Information Processing Systems (NIPS)}, dec 2008.

\bibitem[Jetchev and Toussaint(2013)]{jetchev2013fast}
Nikolay Jetchev and Marc Toussaint.
\newblock Fast motion planning from experience: trajectory prediction for
  speeding up movement generation.
\newblock \emph{Autonomous Robots}, 34\penalty0 (1-2):\penalty0 111--127, 2013.

\bibitem[Jones(2001)]{jones2001direct}
Donald~R Jones.
\newblock Direct global optimization algorithm.
\newblock In \emph{Encyclopedia of optimization}, pages 431--440. Springer,
  2001.

\bibitem[Jongen and Weber(1990)]{Jongen1990}
Hubertus~Th. Jongen and Gerhard~W. Weber.
\newblock On parametric nonlinear programming.
\newblock \emph{Annals of Operations Research}, 27\penalty0 (1):\penalty0
  253--283, 1990.
\newblock ISSN 1572-9338.
\newblock \doi{10.1007/BF02055198}.
\newblock URL \url{http://dx.doi.org/10.1007/BF02055198}.

\bibitem[Kuffner~Jr et~al.(2001)Kuffner~Jr, Nishiwaki, Kagami, Inaba, and
  Inoue]{kuffner2001footstep}
James~J Kuffner~Jr, Koichi Nishiwaki, Satoshi Kagami, Masayuki Inaba, and
  Hirochika Inoue.
\newblock Footstep planning among obstacles for biped robots.
\newblock In \emph{IEEE/RSJ Intl. Conf. on Intelligent Robots and Systems},
  volume~1, pages 500--505, 2001.

\bibitem[Lien and Lu(2009)]{lien2009planning}
Jyh-Ming Lien and Yanyan Lu.
\newblock Planning motion in environments with similar obstacles.
\newblock In \emph{Robotics: Science and Systems}, 2009.

\bibitem[Mao and Hsia(1997)]{mao1997obstacle}
Ziqiang Mao and TC~Hsia.
\newblock Obstacle avoidance inverse kinematics solution of redundant robots by
  neural networks.
\newblock \emph{Robotica}, 15\penalty0 (01):\penalty0 3--10, 1997.

\bibitem[Pan et~al.(2014)Pan, Chen, and Abbeel]{PCA2014}
Jia Pan, Zhuo Chen, and Pieter Abbeel.
\newblock Predicting initialization effectiveness for trajectory optimization.
\newblock In \emph{IEEE Intl. Conf. Robotics and Automation}, 2014.

\bibitem[Phillips et~al.(2012)Phillips, Cohen, Chitta, and Likhachev]{PCCL2012}
Mike Phillips, Benjamin Cohen, Sachin Chitta, and Maxim Likhachev.
\newblock E-graphs: Bootstrapping planning with experience graphs.
\newblock In \emph{Robotics: Science and Systems}, 2012.

\bibitem[Shapiro(1985)]{shapiro1985second}
Alexander Shapiro.
\newblock Second order sensitivity analysis and asymptotic theory of
  parametrized nonlinear programs.
\newblock \emph{Mathematical Programming}, 33\penalty0 (3):\penalty0 280--299,
  1985.

\bibitem[Souza et~al.(2001)Souza, Vijayakumar, and Schaal]{souza2001learning}
Aaron~D Souza, Sethu Vijayakumar, and Stefan Schaal.
\newblock Learning inverse kinematics.
\newblock In \emph{IEEE/RSJ Intl. Conf. on Intelligent Robots and Systems},
  volume~1, pages 298--303. IEEE, 2001.

\bibitem[Wang et~al.(1999)Wang, Hu, and Jiang]{wang1999lagrangian}
Jun Wang, Qingni Hu, and Danchi Jiang.
\newblock A lagrangian network for kinematic control of redundant robot
  manipulators.
\newblock \emph{IEEE Trans. Neural Networks}, 10\penalty0 (5):\penalty0
  1123--1132, 1999.

\bibitem[Zhang and Wang(2004)]{zhang2004obstacle}
Yunong Zhang and Jun Wang.
\newblock Obstacle avoidance for kinematically redundant manipulators using a
  dual neural network.
\newblock \emph{IEEE Trans. on Systems, Man, and Cybernetics, Part B:
  Cybernetics}, 34\penalty0 (1):\penalty0 752--759, 2004.

\end{thebibliography}

\end{document}